\documentclass[runningheads]{llncs}
\usepackage[T1]{fontenc}
\usepackage{graphicx}
\usepackage{booktabs}
\usepackage[misc]{ifsym}

\usepackage{mwe}
\usepackage[utf8]{inputenc} % allow utf-8 input
\usepackage[T1]{fontenc}    % use 8-bit T1 fonts
\usepackage{hyperref}       % hyperlinks
\usepackage{url}            % simple URL typesetting
\usepackage{booktabs}       % professional-quality tables
\usepackage{amsfonts}       % blackboard math symbols
\usepackage{nicefrac}       % compact symbols for 1/2, etc.
\usepackage{microtype}      % microtypography
\usepackage{xcolor}         % colors

\usepackage{graphicx}
\usepackage{subfigure}
\usepackage{booktabs} %
\usepackage{tikz}
\usepackage{hyperref}
\usepackage{epstopdf}
\usepackage{amssymb}
\usepackage{mathtools}

\usepackage{bbm}
\usepackage{amsmath,amsfonts}

\def\eqref#1{equation~\ref{#1}}
\def\1{\bm{1}}

\DeclareMathAlphabet{\mathsfit}{\encodingdefault}{\sfdefault}{m}{sl}
\SetMathAlphabet{\mathsfit}{bold}{\encodingdefault}{\sfdefault}{bx}{n}

\usepackage[capitalize,noabbrev]{cleveref}
\usepackage{multirow, multicol}

\usepackage{algorithm}
\usepackage{algpseudocode}

\usepackage{pifont}% \newcommand{\cmark}{\ding{51}}%

\begin{document}

\title{Probabilistic Gaussian Homotopy: A Probability-Space Continuation Framework for Nonconvex Optimization}

\titlerunning{Probabilistic Gaussian Homotopy}
% If the full title of your paper is short enough to also fit in the running head, you can omit the abbreviated paper title here. You can check as follows: if you comment out the \titlerunning line, something will appear in the header of all odd-numbered pages of your PDF from page 3 onward. This something is either the full title (in which case all is well), or the error message "Title Suppressed Due to Excessive Length". If this error message appears, you're going to want to provide an abbreviated title within the \titlerunning command, because if you won't do it, Springer will do it for you.

%N.B.: Author information (both in the \author{} and \authorrunning{} command) should only be present in the Camera-Ready Version of your paper. The version that you initially submit for review, ought to be double-blind. So, when initially submitting your paper, use:

\toctitle{Probabilistic Gaussian Homotopy: A Probability-Space Continuation Framework for Nonconvex Optimization}

% \author{Author information scrubbed for double-blind reviewing}
% \author{Eshed Gal\inst{1} \and Samy Wu Fung\inst{2} \and Eldad Haber\inst{1}}
% \authorrunning{E. Gal et al.}

% \institute{
% University of British Columbia, Vancouver, BC, Canada
% \and
% Colorado School of Mines, Golden, CO, USA
% }
\author{Eshed Gal\inst{1}(\Letter) \and Samy Wu Fung\inst{2} \and Eldad Haber\inst{1}}
\authorrunning{E. Gal et al.}

% 3. Include email addresses in the institute block
\institute{
University of British Columbia, Vancouver, BC, Canada \\
\email{eshedg@cs.ubc.ca}, \email{ehaber@eoas.ubc.ca}
\and
Colorado School of Mines, Golden, CO, USA \\
\email{swufung@mines.edu}
}

\tocauthor{Eshed Gal, Samy Wu Fung, Eldad Haber}

% \author{Andr\'e Lauren Benjamin\inst{1} \and
% Calvin Cordozar Broadus Jr.\inst{2,3} \corr \and
% Antwan Andr\'e Patton\inst{1}\orcidID{0000-1111-2222-3333}}
% \author{Eshed Gal \and
% Samy Wu Fung \and
% Eldad Haber}
% You may leave out the orcidID information, if you want to.
% Use \corr to indicate the corresponding author. Note the spacing around the \corr command. Only one author can be the corresponding author.

%N.B.: comment out the \authorrunning{} command for the double-blind version of your paper submitted for review. Later, if your paper is accepted, use the command for the Camera-Ready Version.
% \authorrunning{A.L. Benjamin et al.}
% First names are abbreviated in the running head.
% If there is one author, write 'A.L. Benjamin'.
% If there are two authors, write 'A.L. Benjamin and C.C. Broadus Jr.'
% If there are more than two authors, '[...] et al.' is used.

% \institute{Fictional Southern University, Savannah GA 31404, USA \email{\{a.l.benjamin,a.a.patton\}@fsu.fake}
% \and
% Fictional West Coast University, Long Beach CA 90840, USA \email{ccb@fwcu.fake}
% \and
% Secondary European Affiliation, Tiergartenstr. 17, 69121 Heidelberg, Germany
% \email{lncs@springer.com}}

\maketitle              % typeset the header of the contribution

\begin{abstract}
% We introduce Probabilistic Gaussian Homotopy (PGH), a probability-space continuation framework for nonconvex optimization. Unlike classical Gaussian homotopy, which smooths the objective and uniformly averages gradients, PGH deforms the associated Boltzmann distribution and induces Boltzmann-weighted aggregation of perturbed gradients, which exponentially biases descent directions toward low-energy regions. We show that PGH corresponds to a finite-temperature, posterior-mean generalization of the Moreau envelope and derive a dynamical system governing minimizer evolution along an annealed homotopy path. This establishes a principled connection between Gaussian continuation, Bayesian denoising, and diffusion-style smoothing. We further propose Probabilistic Gaussian Homotopy Optimization (PGHO), a practical stochastic algorithm based on Monte Carlo gradient estimation, and demonstrate strong performance on high-dimensional nonconvex benchmarks and sparse recovery problems where classical gradient methods and objective-space smoothing frequently fail.

We introduce Probabilistic Gaussian Homotopy (PGH), a probability-space continuation framework for nonconvex optimization. Unlike classical Gaussian homotopy, which smooths the objective and uniformly averages gradients, PGH deforms the associated Boltzmann distribution and induces Boltzmann-weighted aggregation of perturbed gradients, which exponentially biases descent directions toward low-energy regions. We show that PGH corresponds to a log-sum-exp (soft-min) homotopy that smooths a nonconvex objective at scale $\lambda>0$ and recovers the original objective as $\lambda\to 0$, yielding a posterior-mean generalization of the Moreau envelope, and we derive a dynamical system governing minimizer evolution along an annealed homotopy path. This establishes a principled connection between Gaussian continuation, Bayesian denoising, and diffusion-style smoothing. We further propose Probabilistic Gaussian Homotopy Optimization (PGHO), a practical stochastic algorithm based on Monte Carlo gradient estimation, and demonstrate strong performance on high-dimensional nonconvex benchmarks and sparse recovery problems where classical gradient methods and objective-space smoothing frequently fail.

\keywords{Homotopy  \and Global optimization  \and Gaussian smoothing \and Moreau envelopes.}
\end{abstract}

\section{Introduction}

Many problems in science and engineering can be formulated as the minimization of a highly nonconvex but differentiable objective
\begin{equation}
\min_{x \in \mathbb{R}^n} f(x),
\end{equation}
where the landscape of $f$ may contain numerous local minima, flat regions, and narrow basins of attraction.
While local optimization methods are effective when initialized near a desirable solution, they often fail in strongly nonconvex settings.
Global strategies such as simulated annealing~\cite{bertsimas1993simulated}, evolutionary methods~\cite{michalewicz1996heuristic}, or multi-start schemes~\cite{marti2013multi} attempt to address this challenge, but frequently suffer from slow convergence or poor scalability in high dimensions.

A classical approach for handling nonconvexity is \emph{homotopy continuation}~\cite{dunlavy2005homotopy}.
Rather than solving the difficult problem directly, one constructs a family of progressively less smoothed objectives and tracks minimizers as the smoothing is removed.
Gaussian homotopy methods smooth the objective via convolution with a Gaussian kernel and gradually decrease the smoothing parameter~\cite{mobahi2015theoretical,mobahi2015link}.
Although effective in certain regimes, these methods operate in \emph{objective space} and lack a direct connection to modern probabilistic modeling.

In parallel, score-based diffusion models have emerged as a powerful framework for high-dimensional generative modeling.
These models define a family of progressively less noisy probability distributions and transport samples from a simple base density to a complex target distribution through a reverse-time stochastic differential equation.
While primarily viewed as sampling algorithms, diffusion models implicitly define a \emph{Gaussian homotopy in probability space}.

% \begin{figure}[t]
%     \centering
%     \includegraphics[width=0.95\linewidth]{Figures/ghomotopy.png}
%     \caption{{Gaussian homotopy for the function $f(x_1,x_2) = \log(Rosen(x_1,x_2) + \alpha (1 + \sin(3 x_1)\sin(3x_2))) $ where $Rosen$ is the Banana Rosenberg function. We compute the function by Monte-Carlo sampling. The function starts as a flat surface and gradually becomes the difficult multi-modal function that is difficult to optimize.}}
%     \label{fig:figpGH}
% \end{figure}
\begin{figure}[t]
    \centering
    \includegraphics[width=0.95\linewidth]{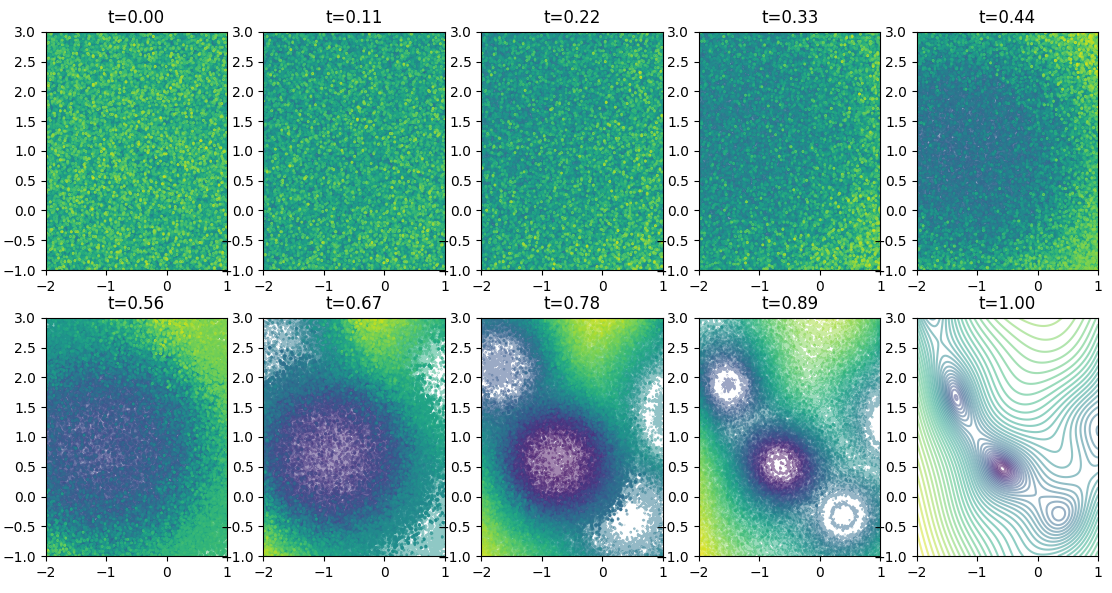}
    \caption{Gaussian homotopy for the function $f(x_1,x_2) = \log(Rosen(x_1,x_2) + \alpha (1 + \sin(3 x_1)\sin(3x_2)))$, where $Rosen$ denotes the banana Rosenbrock function. We estimate the homotopy-smoothed objective by Monte Carlo sampling. The surface begins nearly flat and gradually deforms into a highly multimodal landscape that is much harder to optimize.}
    \label{fig:figpGH}
\end{figure}

This observation motivates the central idea of this work.
Minimizing an objective $f(x)$ is equivalent to maximizing its associated Boltzmann density
\begin{equation}
p(x) \propto \exp\!\left(-\frac{1}{\lambda} f(x)\right).
\end{equation}
Instead of smoothing $f$ directly, we smooth its induced probability distribution and track the evolution of high-probability regions as the smoothing vanishes.
This yields a continuation method performed in \emph{probability space}, fundamentally distinct from classical objective-space homotopy.

We formalize this idea through \emph{Probabilistic Gaussian Homotopy (PGH)}, which constructs a time-dependent family of Gaussian-smoothed Boltzmann densities via a soft-min (log-sum-exp) aggregation of perturbed objective values.
As the smoothing decreases, the associated energy deforms continuously from a nearly flat landscape to the original nonconvex objective (see Figure~\ref{fig:figpGH}).
Minimizers are tracked along this path using a gradient-driven dynamical system.

The construction admits a natural Bayesian interpretation.
Classical Moreau envelope smoothing corresponds to computing a Maximum A Posteriori (MAP) estimator under Gaussian perturbations.
In contrast, PGH follows the posterior mean, which can be expressed via Tweedie’s formula~\cite{kaipio2006statistical,kailath} as a gradient of the log-marginal density.
Replacing hard inner minimization with probabilistic averaging yields a tractable stochastic gradient estimator that aggregates information from a Gaussian neighborhood, biasing descent directions toward low-energy regions rather than relying solely on local curvature.

\paragraph{Contributions.}
Our contributions are summarized as follows. 
\begin{itemize}
    \item We reinterpret diffusion-style Gaussian smoothing as a homotopy mechanism for optimization, which establishes a bridge between classical Gaussian homotopy, Moreau envelope smoothing, and score-based diffusion. 
    \item We introduce \emph{Probabilistic Gaussian Homotopy (PGH)}, a probabilistic continuation framework that smooths the Boltzmann distribution associated with an objective rather than the objective itself.
    \item We establish a rigorous connection between this construction and Moreau envelope smoothing, showing that PGH corresponds to a posterior-mean (soft-min) variant of classical Moreau regularization.
    \item We derive an ordinary differential equation governing the evolution of minimizers along the homotopy and propose practical numerical schemes for tracking this path.
    % \item We provide theoretical insight through a \textcolor{red}{closed-form analysis in the quadratic case} and demonstrate strong empirical performance on standard nonconvex benchmarks.
\end{itemize}
% \vspace{0.5em}
In summary, we introduce a principled probabilistic continuation framework that connects Bayesian denoising, diffusion dynamics, and global nonconvex optimization.

\section{Background}

We briefly review three fundamental concepts which underlies our framework: Gaussian homotopy in optimization, Moreau envelope smoothing, and score-based diffusion.

\subsection{Gaussian Homotopy in Objective Space}

Homotopy continuation methods address nonconvex optimization by embedding a difficult objective into a family of progressively less smoothed surrogates.
Given $f : \mathbb{R}^n \to \mathbb{R}$, classical Gaussian homotopy constructs
\begin{equation}
F_\sigma(x) = (f * G_\sigma)(x)
= \int_{\mathbb{R}^n} f(y)\, G_\sigma(x-y)\, dy,
\end{equation}
where $G_\sigma$ is a Gaussian kernel with variance $\sigma^2$.
Equivalently,
\begin{equation}
F_\sigma(x)
= \mathbb{E}_{z \sim \mathcal{N}(0,I)}[f(x + \sigma z)].
\end{equation}
For large $\sigma$, $F_\sigma$ is smooth and often easier to optimize; as $\sigma \to 0$, $F_\sigma \to f$.

The continuation strategy tracks minimizers $x^\star(\sigma) \in \arg\min F_\sigma(x)$ while gradually decreasing $\sigma$.
Gradients are given by
\begin{equation}
\nabla F_\sigma(x)
= \mathbb{E}_{z}[\nabla f(x + \sigma z)],
\end{equation}
so descent directions correspond to a uniform average of gradients in a Gaussian neighborhood.

While effective in certain regimes, this formulation operates directly in objective space and averages function values and gradients \emph{uniformly}.
As a result, competing basins contribute equally to the descent direction, which may potentially blur structural differences between low- and high-energy regions.
This observation motivates exploring continuation directly in probability space rather than in objective space.

\subsection{Moreau Envelope and Soft-Min Smoothing}

Another classical smoothing construction is the Moreau envelope.
For $\lambda > 0$, the Moreau envelope of $f$ is defined as
\begin{equation}
M_\lambda f(x)
= \min_{y} \left\{
f(y) + \frac{1}{2\lambda} \|x - y\|^2
\right\}.
\end{equation}
The envelope replaces $f$ by a locally regularized minimization problem.
As $\lambda \to 0$, $M_\lambda f(x) \to f(x)$, while for larger $\lambda$ the envelope becomes smoother and may reduce nonconvexity.

A useful probabilistic perspective arises from the soft-min (log-sum-exp) operator:
\begin{equation}
\mathrm{softmin}_\lambda(g)
= -\lambda \log \int
\exp\!\left(-\frac{1}{\lambda} g(y)\right) dy.
\end{equation}
By Laplace’s principle, $\mathrm{softmin}_\lambda(g)$ approximates $\min_y g(y)$ as $\lambda \to 0$.
Applying this to
\begin{equation}
g(y) = f(y) + \frac{1}{2\lambda}\|x-y\|^2
\end{equation}
yields a smoothed approximation to the Moreau envelope that replaces hard minimization by probabilistic averaging.
This observation provides a bridge between convexification methods and probabilistic smoothing.

\subsection{Score-Based Diffusion and Gaussian Smoothing}

Score-based diffusion models construct a family of probability densities $\{p_t\}_{t\in[0,1]}$ by progressively adding Gaussian noise to data.
A common forward noising process takes the form
\begin{equation}
x_t = \alpha(t) x_1 + \beta(t) \varepsilon,
\quad \varepsilon \sim \mathcal{N}(0,I),
\end{equation}
where $\beta(t)$ increases with $t$.
Sampling is performed by integrating a reverse-time stochastic differential equation involving the score $\nabla_x \log p_t(x)$.

From a geometric viewpoint, diffusion defines a Gaussian homotopy in probability space:
the density evolves continuously from a simple Gaussian to a complex target distribution.
While diffusion models are typically interpreted as sampling algorithms, their structure naturally induces a continuation path across progressively de-smoothed distributions.

These three perspectives based on objective-space Gaussian homotopy, Moreau smoothing, and diffusion-based Gaussian perturbation, form the conceptual basis for the probabilistic homotopy framework developed in the next section.

\section{Probabilistic Gaussian Homotopy}

We now introduce \emph{Probabilistic Gaussian Homotopy (PGH)}, a continuation framework that operates directly in probability space rather than in objective space. 
The key idea is to deform the Boltzmann density associated with $f$ and track its minimizers as the smoothing vanishes.

\subsection{Probability-Space Homotopy Construction}

Let $f : \mathbb{R}^n \to \mathbb{R}$ be a possibly nonconvex objective and define its Boltzmann density
\begin{equation}
p_1(x) \propto \exp\!\left(-\frac{1}{\lambda(1)} f(x)\right).
\end{equation}

Instead of smoothing $f$ directly, we construct a time-dependent family of densities $\{p_t\}_{t \in [0,1]}$ by introducing Gaussian perturbations:
\begin{equation}
p_t(x)
\propto
\mathbb{E}_{z \sim \mathcal{N}(0,I)}
\left[
\exp\!\left(
-\frac{1}{\lambda(t)}
f\big(\alpha(t)x + \beta(t) z\big)
\right)
\right],
\label{eq:pt_definition}
\end{equation}
where $\alpha(t)$ and $\beta(t)$ define a smoothing schedule satisfying $\beta(1)=0$ and $\beta(0)>0$.

The corresponding energy functional is
\begin{equation}
F_t(x) = -\lambda(t)\log p_t(x).
\label{eq:Ft_definition}
\end{equation}

This construction defines a continuous deformation of the objective landscape.
At $t=1$, $\beta(1)=0$ and we recover the original objective,
\begin{equation}
F_1(x) = f(x).
\end{equation}
At $t=0$, strong smoothing renders $F_0(x)$ nearly constant, which yields a trivial optimization problem.
Thus, $\{F_t\}$ forms a homotopy between a flat landscape and the original nonconvex objective, as observed in Figure~\ref{fig:figpGH}.
Crucially, smoothing occurs in the Boltzmann density rather than directly in $f$, producing a continuation mechanism fundamentally different from objective-space Gaussian homotopy.

\subsection{Gradient Structure: Soft-Min Aggregation}

To understand the geometry of this homotopy, we examine the gradient of $F_t$.
Differentiating \eqref{eq:Ft_definition} yields
\begin{equation}
\nabla_x F_t(x)
=
\alpha(t)
\frac{
\mathbb{E}_z
\left[
\exp\!\left(
-\frac{1}{\lambda(t)} f(\alpha(t)x+\beta(t)z)
\right)
\nabla f(\alpha(t)x+\beta(t)z)
\right]
}{
\mathbb{E}_z
\left[
\exp\!\left(
-\frac{1}{\lambda(t)} f(\alpha(t)x+\beta(t)z)
\right)
\right]
}.
\label{eq:gradient_exact}
\end{equation}

The gradient is therefore a \emph{weighted average} of gradients in a Gaussian neighborhood of $\alpha(t)x$, where weights are proportional to $\exp(-f/\lambda)$.
This induces a soft-min (log-sum-exp) aggregation. In particular, \emph{perturbations with lower objective value contribute exponentially more to the descent direction.}

This contrasts with objective-space Gaussian homotopy, where gradients are averaged uniformly.
Here, the continuation dynamics are biased toward low-energy regions, naturally emphasizing promising basins while suppressing high-energy perturbations.

\subsection{Continuation Dynamics}

Let
\begin{equation}
x^\star(t) \in \arg\min_x F_t(x).
\end{equation}
Under suitable regularity conditions, minimizers evolve smoothly with $t$.
A natural mechanism for tracking this evolution is the non-autonomous gradient flow
\begin{equation}
\frac{d x(t)}{dt}
=
- \nabla_x F_t(x(t)).
\label{eq:gradient_flow}
\end{equation}
Here,~\eqref{eq:gradient_flow} defines a probability-space continuation dynamics.
As $t$ increases and smoothing decreases, trajectories follow the evolving minimizer path.

\subsection{Stochastic Approximation and Discrete Scheme}

In practice, expectations in \eqref{eq:gradient_exact} are approximated via Monte Carlo sampling.
Given $z_k \sim \mathcal{N}(0,I)$,
\begin{equation}
\nabla_x F_t(x)
\approx
\alpha(t)
\sum_{k=1}^K
w_k(x,t)
\nabla f(\alpha(t)x+\beta(t)z_k),
\end{equation}
where
\begin{equation}
w_k(x,t)
=
\frac{
\exp\!\left(-\frac{1}{\lambda(t)} f(\alpha(t)x+\beta(t)z_k)\right)
}{
\sum_j
\exp\!\left(-\frac{1}{\lambda(t)} f(\alpha(t)x+\beta(t)z_j)\right)
}.
\end{equation}
This estimator replaces hard inner minimization with soft probabilistic aggregation and parallelizes naturally across samples.

Discretizing \eqref{eq:gradient_flow} with time steps $t_k$ yields
\begin{equation}
x_{k+1}
=
x_k
-
\eta_k
\nabla_x F_{t_k}(x_k),
\end{equation}
which simultaneously performs gradient descent on $F_{t_k}$ and advances along the homotopy path.
We refer to this procedure as \emph{Probabilistic Gaussian Homotopy Optimization (PGHO)}.
Multiple trajectories may be evolved in parallel, enabling exploration from diverse initializations.
Algorithm~\ref{alg:alg1} summarizes the procedure.

\begin{algorithm}[t]
\caption{Probabilistic Gaussian Homotopy Optimization (PGHO)}
\begin{algorithmic}[1]
\State \textbf{Input:} Objective $f$, homotopy steps $T$, batch size $B$, samples $K$, step sizes $\{\eta_k\}$
\State Sample initial particles $x_i^0 \sim \mathcal{N}(0,I)$ for $i=1,\dots,B$
\For{$k=0$ to $T-1$}
    \State $t_k \leftarrow k/(T-1)$
    \For{$i=1$ to $B$}
        \State Sample $z_{i,1},\dots,z_{i,K} \sim \mathcal{N}(0,I)$
        \State Compute Monte Carlo objective
        \begin{equation}
        F_{t_k}^{(K)}(x_i^k)
        = -\lambda(t)
        \log\!\left(
        \frac{1}{K}
        \sum_{j=1}^K
        \exp\big(- \lambda(t_k)^{-1}f(\alpha(t_k)x_i^k+\beta(t_k)z_{i,j})\big)
        \right)
        \end{equation}
        \State Compute gradient
        \begin{equation}
        g_i^k = \nabla_x F_{t_k}^{(K)}(x_i^k)
        \end{equation}
        \State Update
        \begin{equation}
        x_i^{k+1} \leftarrow x_i^k - \eta_k g_i^k
        \end{equation}
    \EndFor
\EndFor
\State \textbf{Return} $\{x_i^T\}_{i=1}^B$
\end{algorithmic}
\label{alg:alg1}
\end{algorithm}

\paragraph{Computational Complexity.}
Each gradient evaluation requires $K$ evaluations of $f$ and $\nabla f$, yielding total cost ${\cal O}(T B K C_f)$, where $C_f$ denotes the cost of evaluating $f$ and its gradient. All operations parallelize over batch and noise samples.

\section{Connection to Moreau Envelopes and Bayesian Interpretation}

We now show that Probabilistic Gaussian Homotopy (PGH) can be interpreted as a probabilistic, finite-temperature generalization of classical Moreau smoothing, and admits a natural Bayesian interpretation.

\subsection{Soft Moreau Envelope}

We now formalize the relationship between Probabilistic Gaussian Homotopy (PGH) and a finite-temperature (log-sum-exp) relaxation of the Moreau envelope.
Recall that the Moreau envelope of $f$ with parameter $\lambda>0$ is defined as
\begin{equation}
M_\lambda f(x)
=
\min_{y}
\left\{
f(y) + \frac{1}{2\lambda}\|x-y\|^2
\right\}.
\label{eq:moreau}
\end{equation}
Replacing the hard minimum in \eqref{eq:moreau} by a soft-min (log-sum-exp) operator yields the finite-temperature surrogate
\begin{equation}
\widetilde{M}_\lambda f(x)
=
-\lambda \log
\int
\exp\!\left(
-\frac{1}{\lambda}
\left(
f(y) + \frac{1}{2\lambda}\|x-y\|^2
\right)
\right)
dy.
\label{eq:soft_moreau}
\end{equation}
We now show that PGH coincides with this soft Moreau envelope under a particular parameter choice.

\begin{theorem}[PGH as Soft Moreau Envelope]
Let $F_t(x)$ denote the PGH energy defined in~\eqref{eq:Ft_definition}. 
% \begin{equation}
% F_t(x)
% =
% -\lambda(t)
% \log
% \mathbb{E}_{z \sim \mathcal{N}(0,I)}
% \left[
% \exp\!\left(
% -\frac{1}{\lambda(t)} f(\alpha(t)x+\beta(t)z)
% \right)
% \right].
% \end{equation}
If $\alpha(t)=1$ and $\beta(t)=\sqrt{\lambda(t)}$, then
\begin{equation}
F_t(x)
=
\widetilde{M}_{\lambda(t)} f(x)
+ C(t),
\end{equation}
where $C(t)$ is a constant independent of $x$.
\end{theorem}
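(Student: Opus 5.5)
The plan is a direct change of variables in the Gaussian expectation defining $F_t$, followed by matching exponents with the integrand of \eqref{eq:soft_moreau}. With $\alpha(t)=1$ and $\beta(t)=\sqrt{\lambda(t)}$, and writing $\lambda=\lambda(t)$, definitions \eqref{eq:pt_definition} and \eqref{eq:Ft_definition} give
\begin{equation*}
F_t(x) = -\lambda\log\int_{\mathbb{R}^n} (2\pi)^{-n/2}\exp\!\left(-\tfrac{1}{\lambda}f(x+\sqrt{\lambda}\,z)-\tfrac12\|z\|^2\right)dz \;+\; c_0(t),
\end{equation*}
where $c_0(t)=\lambda\log Z_t$ comes from the normalizing constant of $p_t$. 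The first step is to check that $Z_t=\int \mathbb{E}_z[\cdots]\,dx$ depends only on $t$, since it is obtained by integrating over $x$. Under the standing integrability assumption on $e^{-f/\lambda}$, Tonelli's theorem shows $Z_t=\int e^{-f(y)/\lambda}dy$, which is finite and independent of $x$.

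The second step is the substitution $y=x+\sqrt{\lambda}\,z$, so that $dz=\lambda^{-n/2}dy$ and $\tfrac12\|z\|^2=\tfrac{1}{2\lambda}\|x-y\|^2$. The Jacobian and the Gaussian prefactor factor out of the logarithm as the additive term $-\lambda\log\big((2\pi\lambda)^{-n/2}\big)=\tfrac{n\lambda}{2}\log(2\pi\lambda)$. This term depends only on $t$, so it is absorbed into $C(t)$ together with $c_0(t)$.

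What remains is
\begin{equation*}
-\lambda\log\int \exp\!\left(-\tfrac1\lambda f(y)-\tfrac{1}{2\lambda}\|x-y\|^2\right)dy .
\end{equation*}
The exponent is $-\tfrac1\lambda$ times a proximal objective of the form $f(y)+\tfrac12\|x-y\|^2$. This is exactly the soft-min of $f$ plus a quadratic proximity penalty at temperature $\lambda$, which is the structure of \eqref{eq:soft_moreau}. The final step is therefore to identify this expression with $\widetilde M_{\lambda(t)}f(x)$, which yields $F_t=\widetilde M_{\lambda(t)}f+C(t)$.

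I expect this identification to be the main obstacle. One must track precisely how the proximal weight appears once the Gaussian variance $\beta^2=\lambda$ is merged with the Boltzmann temperature $\lambda$. The choice $\beta=\sqrt{\lambda}$ is what makes the Gaussian exponent $\tfrac{1}{2\lambda}\|x-y\|^2$ share the same $1/\lambda$ prefactor as $f$, so the whole integrand becomes $\exp(-\tfrac1\lambda\cdot\text{proximal objective})$. I would write this comparison out term by term against \eqref{eq:soft_moreau}, keeping every $t$-dependent factor in $C(t)$. If the quadratic coefficients in the two expressions differ only by a scalar multiple, I would not absorb that multiple into $C(t)$: it multiplies $\|x-y\|^2$ and so depends on $x$. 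Instead I would resolve it by fixing the convention for the proximal parameter of $\widetilde M_\lambda$ that the statement implicitly uses, and make that convention explicit in the proof.
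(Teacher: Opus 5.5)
\textbf{There is a genuine gap in your final step.} Your computation is correct up to that step, and it uses the same substitution as the paper, which runs $y=x+\sqrt{\lambda}\,z$ starting from \eqref{eq:soft_moreau} rather than from $F_t$. The gap is the identification you flag as the main obstacle: it cannot be made. Definition \eqref{eq:soft_moreau} fixes the integrand as $\exp(-\frac{1}{\lambda}f(y)-\frac{1}{2\lambda^{2}}\|x-y\|^{2})$. Your reduction of $F_t$ gives $\exp(-\frac{1}{\lambda}f(y)-\frac{1}{2\lambda}\|x-y\|^{2})$, which is the soft-min of $f(y)+\frac12\|x-y\|^2$, not of $f(y)+\frac{1}{2\lambda}\|x-y\|^2$. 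The proximal weights differ by a factor $\lambda$. As you observe yourself, that factor multiplies $\|x-y\|^2$ and cannot go into $C(t)$.

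In fact the statement as written is false for $\lambda\neq 1$. Take $f(y)=\frac{a}{2}\|y\|^{2}$ with $a>0$. Both sides are then Gaussian integrals, and
\[
F_t(x)=\tfrac{a}{2(1+a)}\|x\|^{2}+c(t), \qquad \widetilde{M}_{\lambda}f(x)=\tfrac{a}{2(1+a\lambda)}\|x\|^{2}+c'(t).
\]
Their difference is $\frac{a^{2}(\lambda-1)}{2(1+a)(1+a\lambda)}\|x\|^{2}$ up to a constant, which depends on $x$. Your fallback of fixing the convention the statement implicitly uses is not available. The statement refers to the explicitly defined $\widetilde{M}_\lambda$, so changing its proximal parameter means proving a different theorem.

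The paper's own proof hides the same mismatch behind an algebra slip. Substituting $y=x+\sqrt{\lambda}\,z$ into \eqref{eq:soft_moreau} produces the factor $\exp(-\frac{1}{2\lambda}\|z\|^{2})$, not the $\exp(-\frac12\|z\|^{2})$ written there.

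What is true, and what your argument proves almost verbatim, is one of two corrected statements:
\begin{itemize}
\item[(i)] Keep \eqref{eq:soft_moreau} as defined and take $\alpha=1$, $\beta(t)=\lambda(t)$. Substituting $y=x+\lambda z$ makes $\frac12\|z\|^{2}=\frac{1}{2\lambda^{2}}\|x-y\|^{2}$, so $F_t=\widetilde{M}_{\lambda(t)}f+C(t)$.
\item[(ii)] Keep $\beta=\sqrt{\lambda}$. Then $F_t$ is, up to $C(t)$, the soft-min at temperature $\lambda$ of $f(y)+\frac12\|x-y\|^{2}$, i.e.\ proximal parameter $1$ instead of $\lambda$.
\end{itemize}
More generally, a proximal parameter $\mu$ at temperature $\lambda$ corresponds to $\beta^{2}=\lambda\mu$. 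You should commit to one of these corrected statements rather than leave the identification open.

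The rest of your argument is sound and more careful than the paper's. The Tonelli argument shows that $Z_t=\int e^{-f(y)/\lambda}\,dy$ is independent of $x$, and the Jacobian constant $\frac{n\lambda}{2}\log(2\pi\lambda)$ is correct.
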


\begin{proof}
Starting from \eqref{eq:soft_moreau}, perform the change of variables
\begin{equation}
y = x + \sqrt{\lambda} z,
\quad
dy = \lambda^{n/2} dz.
\end{equation}
Then $\|x-y\|^2 = \lambda \|z\|^2$, and \eqref{eq:soft_moreau} becomes
\begin{equation}
\widetilde{M}_\lambda f(x)
=
-\lambda \log
\int
\exp\!\left(
-\frac{1}{\lambda} f(x+\sqrt{\lambda}z)
\right)
\exp\!\left(-\frac{1}{2}\|z\|^2\right)
dz
+ C,
\end{equation}
where $C$ absorbs normalization constants independent of $x$.
Recognizing the Gaussian density yields
\begin{equation}
\widetilde{M}_\lambda f(x)
=
-\lambda \log
\mathbb{E}_{z \sim \mathcal{N}(0,I)}
\left[
\exp\!\left(
-\frac{1}{\lambda} f(x+\sqrt{\lambda}z)
\right)
\right]
+ C.
\end{equation}
This matches the PGH energy with $\alpha=1$ and $\beta=\sqrt{\lambda}$.
\qed
\end{proof}

The theorem shows that PGH reduces to a finite-temperature (soft) Moreau envelope under the canonical scaling $\beta^2=\lambda$. 
Thus, PGH can be interpreted as a probabilistic relaxation of classical Moreau smoothing in which hard minimization is replaced by Gaussian-weighted averaging.

\subsection{Bayesian Interpretation and Posterior Mean Dynamics}

Similarly, we formalize the Bayesian interpretation of PGH and show that its gradient dynamics correspond to posterior-mean updates under Gaussian perturbations.

Consider the denoising model
\begin{equation}
x = y + \sqrt{\lambda}\, z,
\quad
z \sim \mathcal{N}(0,I),
\end{equation}
with Gibbs prior
\begin{equation}
\pi(y) \propto \exp(-f(y)).
\end{equation}
The posterior distribution of $y$ given $x$ is
\begin{equation}
p(y \mid x)
\propto
\exp\!\left(
-f(y) - \frac{1}{2\lambda}\|x-y\|^2
\right).
\label{eq:posterior}
\end{equation}
The Maximum A Posteriori (MAP) estimator of \eqref{eq:posterior} coincides with the Moreau minimizer \eqref{eq:moreau}. 
In contrast, PGH follows the posterior mean.

\begin{theorem}[Posterior Mean Identity]
Define the marginal density
\begin{equation}
p_\lambda(x)
=
\int
\exp(-f(y))
\,\mathcal{N}(x; y, \lambda I)\, dy,
\end{equation}
and the associated probabilistic envelope
\begin{equation}
F_\lambda(x)
=
- \log p_\lambda(x).
\end{equation}
Then the posterior mean satisfies
\begin{equation}
\mathbb{E}[y \mid x]
=
x - \lambda \nabla F_\lambda(x).
\label{eq:posterior_mean_identity}
\end{equation}
\end{theorem}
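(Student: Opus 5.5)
This is Tweedie's formula, and I would prove it by differentiating the marginal density under the integral sign. Write $\mathcal{N}(x;y,\lambda I)=(2\pi\lambda)^{-n/2}\exp\!\left(-\frac{1}{2\lambda}\|x-y\|^2\right)$. Its gradient in $x$ is
\begin{equation*}
\nabla_x \mathcal{N}(x;y,\lambda I)=-\tfrac{1}{\lambda}(x-y)\,\mathcal{N}(x;y,\lambda I).
\end{equation*}

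The steps, in order, are as follows.

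First, I impose a mild regularity condition. I assume that $e^{-f}$ is integrable against the Gaussian, and that $\|y\|e^{-f(y)}$ is dominated locally uniformly in $x$, for example if $f$ is bounded below. Under this condition, dominated convergence justifies exchanging $\nabla_x$ and $\int dy$. This gives
\begin{equation*}
\nabla p_\lambda(x)=\frac{1}{\lambda}\int (y-x)\,e^{-f(y)}\,\mathcal{N}(x;y,\lambda I)\,dy.
\end{equation*}

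Second, I divide by $p_\lambda(x)$, which is strictly positive because the integrand is positive. The posterior \eqref{eq:posterior} is exactly
\begin{equation*}
p(y\mid x)=\frac{e^{-f(y)}\mathcal{N}(x;y,\lambda I)}{p_\lambda(x)},
\end{equation*}
since the Gaussian normalizing constant does not depend on $y$. Dividing therefore yields
\begin{equation*}
\nabla \log p_\lambda(x)=\frac{1}{\lambda}\left(\mathbb{E}[y\mid x]-x\right).
\end{equation*}

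Third, I substitute $F_\lambda=-\log p_\lambda$, so that $\nabla F_\lambda=-\nabla\log p_\lambda$. Rearranging then gives \eqref{eq:posterior_mean_identity}.

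The only real obstacle is the interchange of differentiation and integration, together with the finiteness of the posterior mean. I would handle it with the domination bound described above. On a ball $\|x\|\le R$, the factor $\|x-y\|e^{-\|x-y\|^2/(2\lambda)}$ is bounded by an integrable Gaussian-type envelope in $y$ up to $e^{-f(y)}$. This suffices when $f$ is bounded below, or has at most quadratic decay of $-f$ with a small enough constant.

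I would also remark how this connects to PGH. By the change of variables $y=x+\sqrt{\lambda}z$ used in the proof of the previous theorem, $p_\lambda$ coincides up to constants with the PGH density at unit temperature. Consequently, $\nabla F_\lambda$ is the Boltzmann-weighted gradient average of \eqref{eq:gradient_exact}, and a gradient step of size $\lambda$ on $F_\lambda$ moves $x$ exactly to the posterior mean.
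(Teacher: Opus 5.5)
Your proof is correct and matches the paper's argument: differentiate $p_\lambda$ under the integral, apply the Gaussian score identity $\nabla_x \mathcal{N}(x;y,\lambda I) = -\frac{1}{\lambda}(x-y)\,\mathcal{N}(x;y,\lambda I)$, divide by $p_\lambda(x)$ to recognize the posterior mean, and rearrange. Your explicit domination condition justifying the interchange of derivative and integral (for example, $f$ bounded below) is a welcome addition that the paper leaves implicit.
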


\begin{proof}
Differentiating $p_\lambda(x)$ under the integral sign yields
\begin{equation}
\nabla_x p_\lambda(x)
=
\int
\exp(-f(y))
\nabla_x \mathcal{N}(x; y, \lambda I)\, dy.
\end{equation}
Using the Gaussian score identity
\begin{equation}
\nabla_x \mathcal{N}(x; y, \lambda I)
=
-\frac{1}{\lambda}(x-y)\mathcal{N}(x; y, \lambda I),
\end{equation}
we obtain
\begin{equation}
\nabla_x \log p_\lambda(x)
=
-\frac{1}{\lambda}
\left(
x - \mathbb{E}[y \mid x]
\right).
\end{equation}
Rearranging yields \eqref{eq:posterior_mean_identity}.
\qed
\end{proof}

This theorem shows that gradient descent on the probabilistic envelope $F_\lambda$ corresponds exactly to a posterior-mean update:
\begin{equation}
x \leftarrow x - \eta \nabla F_\lambda(x)
\quad
\Longleftrightarrow
\quad
x \leftarrow (1-\eta)x + \eta\,\mathbb{E}[y \mid x].
\end{equation}
Thus, PGH replaces the MAP-based hard minimization of the Moreau envelope by posterior-mean dynamics driven by Gaussian perturbations. 
This finite-temperature update remains well-defined even in highly nonconvex regimes, where MAP estimators may be unstable or difficult to compute. PGH can therefore be interpreted as a posterior-mean generalization of Moreau smoothing.

\section{Related Work}
\paragraph{Gaussian Homotopy in Optimization.}
Gaussian homotopy (GH) methods have a long history in global optimization, often leveraging heat-equation-based smoothing to regularize nonconvex landscapes \cite{dunlavy2005homotopy,mobahi2015theoretical}. 
Classical GH constructs a family of smoothed objectives 
\(
F(x,t) = (f * G_{\sigma(t)})(x)
\)
by convolving the objective function $f$ with a Gaussian kernel \cite{mobahi2015theoretical}. 
Continuation is achieved by gradually reducing the smoothing parameter while tracking minimizers. 
More recent variants, such as Single-Loop GH (SLGH) \cite{iwakiri2022single}, improve computational efficiency by updating the smoothing parameter and optimization variables simultaneously. 
Despite these advances, existing GH methods operate strictly in \emph{objective space}, which average function values and gradients uniformly across perturbations \cite{mobahi2015theoretical,mobahi2015link}. 
In contrast, our approach performs continuation in \emph{probability space}, which smoothens the induced Boltzmann distribution 
\(
p(x) \propto \exp(-\frac{1}{\lambda}f(x))
\)
rather than the raw objective. 
This shift fundamentally alters the geometry of the descent direction, introducing a soft-min aggregation mechanism absent in classical GH.

\paragraph{Score-Based Diffusion Models.}
Score-based diffusion models have recently achieved remarkable success in high-dimensional generative modeling by learning to reverse a forward Gaussian noising process \cite{vincent2011connection,dhariwal2021diffusion}. 
These methods construct a continuum of intermediate densities and sample by integrating a reverse-time stochastic differential equation driven by the score function $\nabla_x \log p_t(x)$. 
While diffusion models are primarily studied in the context of sampling and image generation, their structure naturally defines a Gaussian homotopy in probability space. 
Connections between diffusion and inverse problems have begun to emerge, but the interpretation of diffusion dynamics as a principled continuation method for \emph{global optimization} remains largely unexplored. 
Our work makes this connection explicit by showing that probability-space smoothing induces a minimizer path analogous to classical homotopy methods.

\paragraph{Moreau Envelopes and Bayesian Estimation.}
The Moreau envelope is a central tool in convex analysis and proximal algorithms~\cite{heaton2024global,osher2023hamilton,nocedal1999numerical,boydBook,di2026operator,di2025monte}, providing a smooth regularization of nonsmooth or nonconvex objectives. 
It has recently been applied in large-scale and federated optimization settings \cite{t2020personalized,heaton2024global,osher2023hamilton,meng2025recent}. 
From a statistical viewpoint, the envelope corresponds to a Maximum A Posteriori (MAP) estimator under Gaussian perturbations \cite{tensiam}. 
Recent work has further developed log-integral (log-sum-exp) approximations to infimal convolutions via Laplace’s method~\cite{tibshirani2025laplace,darbon2021bayesian,darbon2021connecting}, which replaces hard minimization with finite-temperature exponential averaging and yields a posterior-mean interpretation. 
PGH aligns with this probabilistic perspective. In particular, it can be viewed as a finite-temperature, Boltzmann-weighted generalization of Moreau smoothing. 
However, rather than approximating proximal maps, we use this log-integral structure to define a continuation path in probability space and derive dynamics that track minimizers as smoothing vanishes. 
This connects Laplace-based smoothing, Bayesian denoising, and diffusion-based homotopy within a unified framework.

\paragraph{Stochastic Gradient Estimation and Graduated Optimization.}
Our stochastic gradient estimator builds on classical results for gradient estimation of expectations \cite{kaipio2006statistical,dalalyan2017theoretical}. 
Unlike standard stochastic gradient descent, which evaluates $\nabla f$ at a single point, PGH computes a weighted average of gradients over a Gaussian neighborhood. 
This mechanism shares similarities with entropy-regularized methods~\cite{chaudhari2019entropy}, but differs in that the weights are derived from the Boltzmann energy, which emphasizes low-energy perturbations exponentially. 
As a result, the continuation dynamics selectively amplify promising descent directions rather than uniformly averaging local curvature.

\paragraph{Global optimization algorithms.}
Random-search methods, such as Pure Random Search (PRS) \cite{Brooks1958RandomMethods}, sample points from the domain and keep the best value observed.
Population-based heuristics such as Differential Evolution (DE) \cite{StornPrice1997DE} and Particle Swarm Optimization \cite{KennedyEberhart1995PSO} maintain a set of candidate solutions
and update them using randomized recombination or velocity-style rules, respectively.
Basin Hopping (BH) \cite{WalesDoye1997BasinHopping,OlsonHashmiMolloyShehu2012BasinHopping} combines random perturbations with local refinement steps, accepting or rejecting moves based on the resulting objective value.
Annealing-based methods \cite{KirkpatrickGelattVecchi1983SimulatedAnnealing} allow occasional uphill moves with a temperature-controlled probability, enabling escape from local minima.
Implicit Filtering (IF) \cite{Kelley2011ImplicitFiltering} is a derivative-free local method based on stencil sampling with a sequence of decreasing step sizes, and is often used for noisy or non-smooth objectives.
Finally, the Covariance Matrix Adaptation Evolution Strategy (CMA-ES) \cite{HansenOstermeier2001CMAES} adapts a Gaussian search distribution by learning its covariance over time,
yielding efficient search directions on ill-conditioned landscapes.
In contrast to these black-box global optimization heuristics, our method exploits gradient information and performs a continuation procedure through a
probabilistic smoothing of the induced Boltzmann distribution, producing a Boltzmann-weighted aggregation of perturbed gradients.
This yields a structured descent direction that remains stable in highly multimodal landscapes.

\begin{figure}
    \centering
    \begin{tabular}{ccc}
    \includegraphics[width=0.5\linewidth]{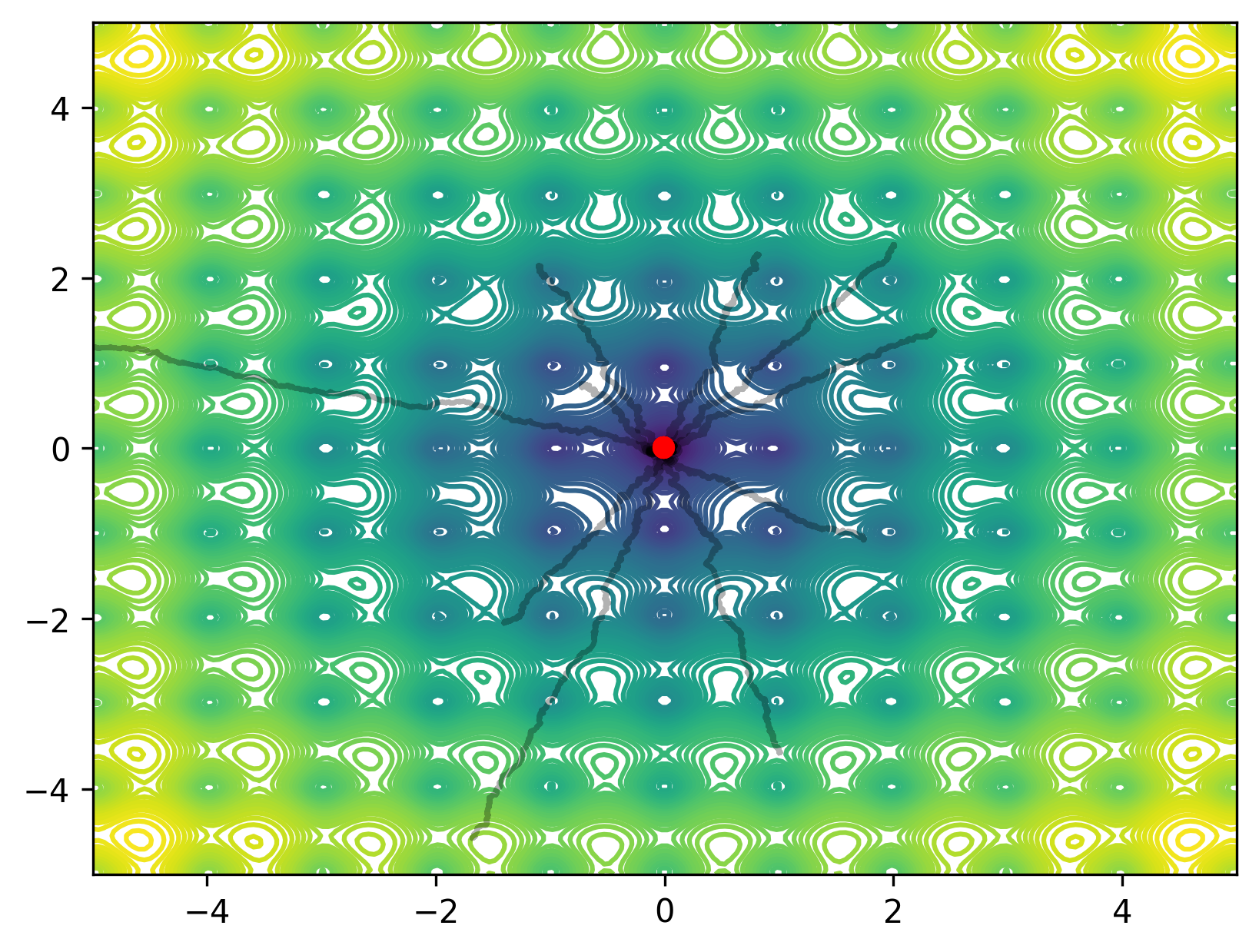} &
    \includegraphics[width=0.5\linewidth]{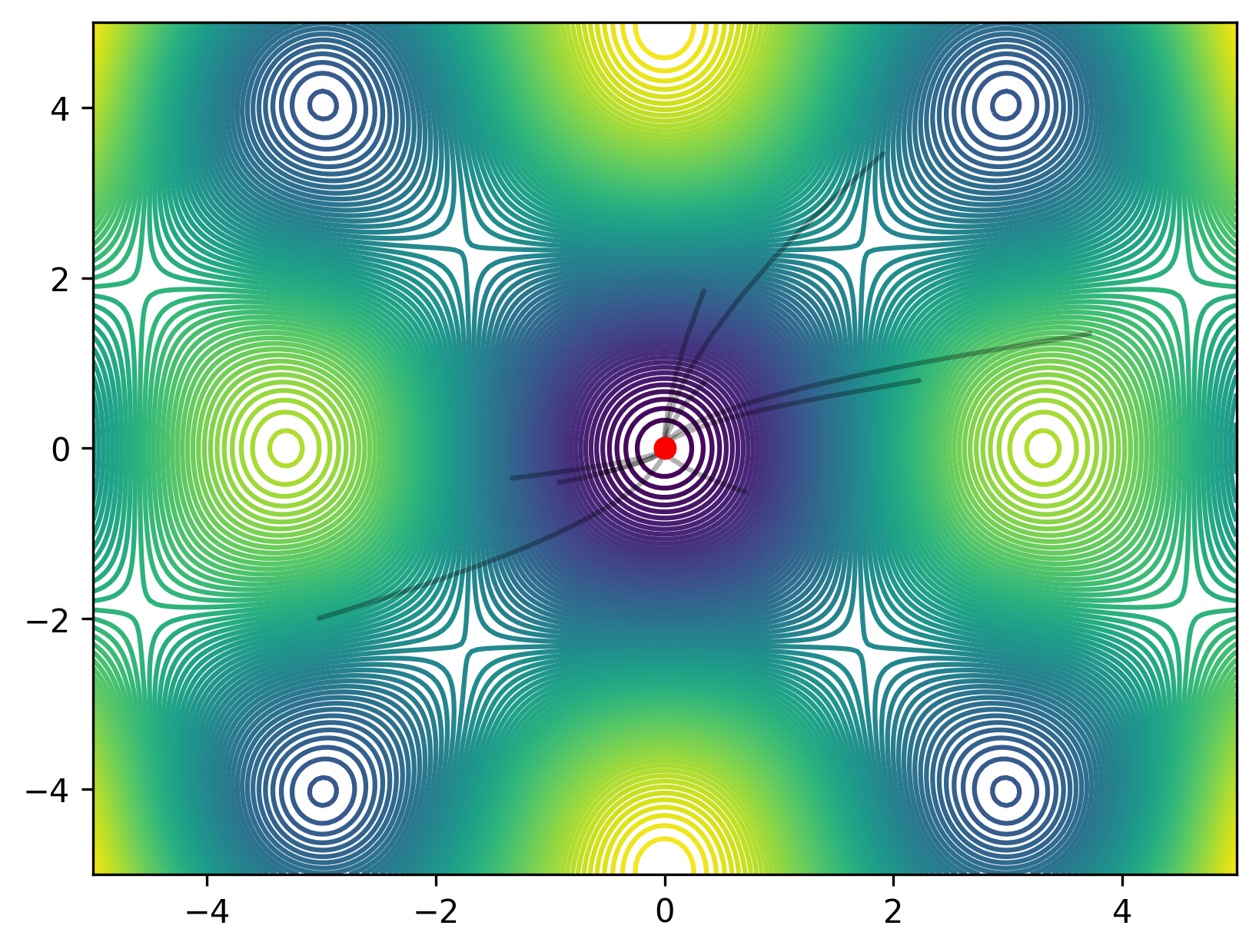} \\
    Ackley &  Griewank
    \end{tabular}
    \caption{Ackley and Griewank plotted for the 2D case with the trajectories obtained by the PGH algorithms starting from different points }
    \label{fig:model_problems}
\end{figure}

\section{Numerical Experiments}

\subsection{Benchmark Problems}
To evaluate the capability of our method to converge on challenging non-convex objectives, we consider standard benchmark functions:
Ackley, Griewank, Alpine1, and Levy, taken from the Virtual Library of Simulation Experiments \cite{SurjanovicBingham2013VLSF}.
Figure~\ref{fig:model_problems} visualizes representative of those functions in the $2$D case, and shows representative convergence trajectories from several random initializations.

We next compare our method with a series of global optimization algorithms, including classical Gaussian Homotopy (GH),
Covariance Matrix Adaptation Evolution Strategy (CMA-ES)~\cite{HansenOstermeier2001CMAES},
Differential Evolution (Diff.Eval.)~\cite{StornPrice1997DE}, Basin Hopping (BasinHop.)~\cite{WalesDoye1997BasinHopping,OlsonHashmiMolloyShehu2012BasinHopping},
( Simulated / Dual ) Annealing (Anneal.)~\cite{KirkpatrickGelattVecchi1983SimulatedAnnealing},
Implicit Filtering (Impl.Filt.) \cite{Kelley2011ImplicitFiltering}, Particle Swarm (Part.Swarm)~\cite{KennedyEberhart1995PSO}, and Pure Random Search (PRS)~\cite{Brooks1958RandomMethods}.
We also report two variants of our approach: \emph{PGH-GD} and \emph{PGH-Adam}, which share the same probabilistic homotopy
objective but differ only in the base update rule used for the inner step (gradient descent vs.\ Adam \cite{KingmaBa2015Adam}).
Table~\ref{tab:tableX_avg_evals_success} summarizes the results. Overall, our method is the fastest across these benchmarks, consistently
reaching the target tolerance with substantially fewer evaluations than the competing baselines.

We further report convergence behavior and scalability on these benchmarks.
% Figure~\ref{fig:convergence}
% shows the optimization traces over multiple runs in dimension $n=10$, illustrating that our method
% typically reaches the success threshold in a small number of evaluations and with low variability across runs. 
Figure~\ref{fig:success_vs_dim_ackley}
reports success rate as a function of dimension, where our method remains highly consistent even as the dimension increases, while some competing
methods degrade substantially in higher dimensions. This highlights that the probabilistic aggregation in our method
provides a stable search direction in settings where other global optimizers may struggle to scale.
Additional experimental details are provided in the Appendix.

% \begin{figure}
%     \centering
%     \begin{tabular}{ccc}
%     \includegraphics[width=0.5\linewidth]{Figures/ackley.png} &
%     % \includegraphics[width=0.33\linewidth]{Figures/rastrigin.png} &
%     \includegraphics[width=0.5\linewidth]{Figures/griewank.png} \\
%     Ackley &  Griewank
%     \end{tabular}
%     \caption{Ackley and Griewank plotted for the 2D case with the trajectories obtained by the PGH algorithms starting from different points }
%     \label{fig:model_problems}
% \end{figure}

\begin{table}[t]
\centering
\caption{Mean number of function evaluations required to reach success (``N'' indicates no successful runs), where success is defined as $f(x) < 5\times10^{-2}$. Each run uses a maximum budget of $2 \times 10^5$ function evaluations. Results are averaged over 30 runs at dimension $n=10$.}
\scriptsize
\setlength{\tabcolsep}{2.5pt}
\renewcommand{\arraystretch}{1.05}
\resizebox{\linewidth}{!}{%
\begin{tabular}{lcccccccccc}
\hline
Function & PGH-GD & PGH-Adam & GH & CMAES & Diff.Eval. & BasinHop. & Anneal. & Impl.Filt. & Part.Swarm &  PRS \\
\hline
Ackley    & 205 & 601 & 3{,}894 & 804  & 6{,}683  & N & 97{,}646 & 240   & 3{,}663 &  N \\
% Rastrigin & 159{,}020        & 142{,}953      & N  & N    & 56{,}786 & N & N       & N   & 101{,}040     &  N \\
Griewank  & 183 & 631 & 3{,}248 & 1{,}088 & 8{,}937 & 14{,}120 & 99{,}785 & 315 & 4{,}891 &  N \\
Alpine1   & 192 & 557 & 3{,}479 &  1{,}176 & 11{,}367 & N & 97{,}195 & 308   & 4{,}187 &  N \\
Levy      & 3{,}067 & 562 & 3{,}141 & 817  & 5{,}731  & N & 85{,}933 & N & 3{,}146 &  N \\
\hline
\end{tabular}%
}
\label{tab:tableX_avg_evals_success}
\end{table}

% \begin{figure}[h]
%     \centering
%     \begin{tabular}{cc}
%         \includegraphics[width=0.45\textwidth]{Figures/convergence_plots/Ackley_PGH_convergence.png} &
%         \includegraphics[width=0.45\textwidth]{Figures/convergence_plots/Alpine1_PGH_convergence.png} \\
%         \includegraphics[width=0.45\textwidth]{Figures/convergence_plots/Griewank_PGH_convergence.png} &
%         \includegraphics[width=0.45\textwidth]{Figures/convergence_plots/Levy_PGH_convergence.png} \\
%     \end{tabular}
%     \caption{Convergence of PGH-Adam on four benchmark functions (dim=10). 
%     Solid line shows median best objective value over 20 runs; 
%     shaded region shows interquartile range. 
%     Dashed line marks the success threshold ($5 \times 10^{-2}$).}
%     \label{fig:convergence}
% \end{figure}

\begin{figure}[h]
    \centering
    \includegraphics[width=0.6\textwidth]{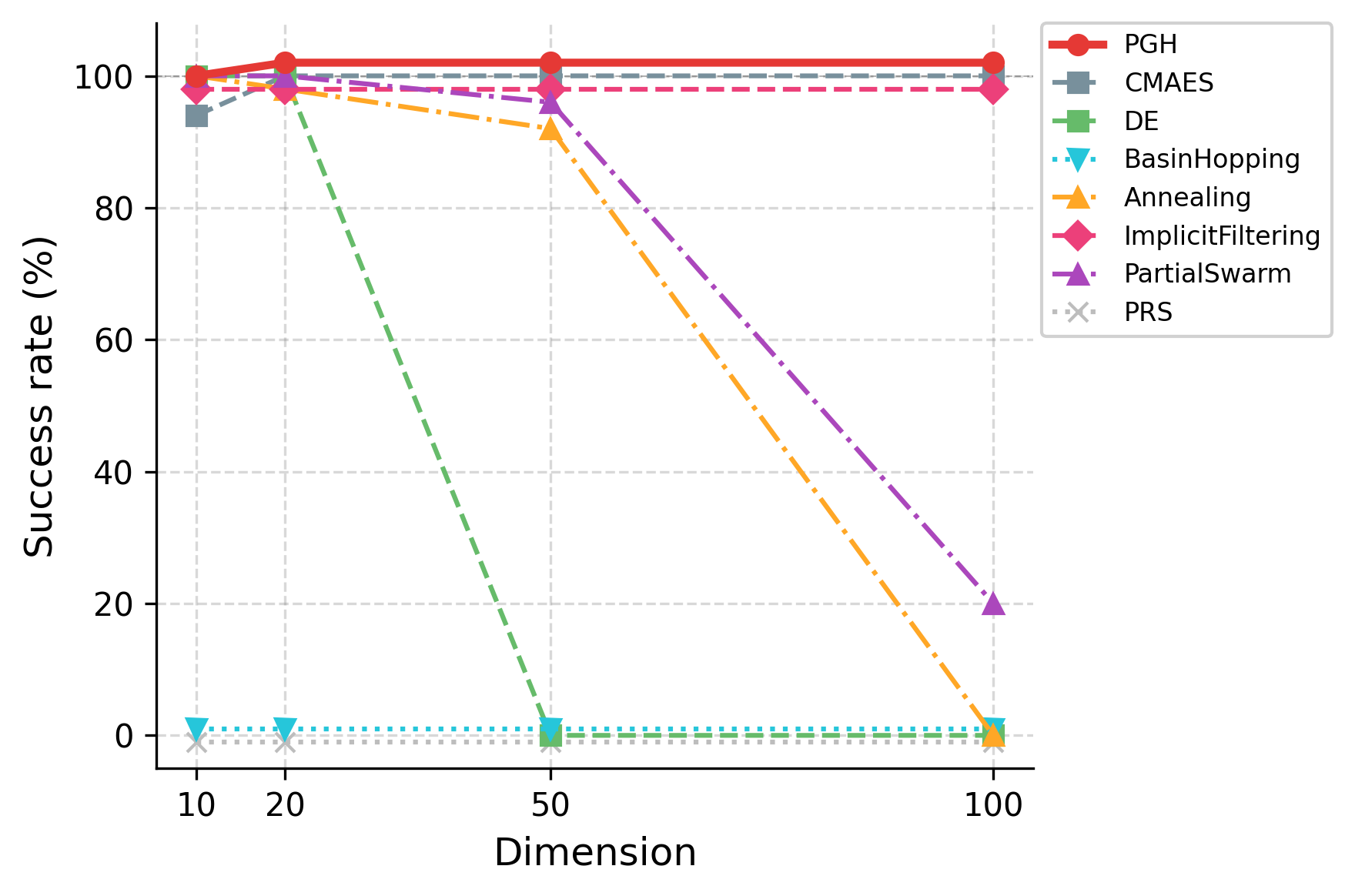}
    \caption{Success rate vs.\ dimension on Ackley. PGH maintains high success rate
    across all dimensions, while competing methods degrade significantly on higher dimensions. Budget fixed at $10^5$ function evaluations, averaged over 30 runs.}
    \label{fig:success_vs_dim_ackley}
\end{figure}

\subsection{Sparse Recovery}
\label{sec:sparse_recovery}

We next consider sparse recovery from undersampled linear measurements, which provides a standard and challenging nonconvex inverse problem. We observe
\begin{equation}
    y = A x^\star + \eta,
\end{equation}
where $A \in \mathbb{R}^{m \times n}$ is a sensing matrix with $m < n$, $x^\star \in \mathbb{R}^n$ is a sparse ground-truth signal, and $\eta$ is additive noise. Our goal is to recover $x^\star$ from $y$ by solving a sparsity-promoting regression problem.

In particular, we study the $\ell_0$-regularized least-squares formulation
\begin{equation}
    \min_{x \in \mathbb{R}^n}
    \; \frac{1}{2}\|Ax-y\|_2^2 + \lambda \|x\|_0,
\end{equation}
which balances data fidelity with sparsity of the recovered signal \cite{natarajan1995sparse}. While there are many possible relaxations for this problem \cite{tibshirani1996lasso,mohimani2009sl0}, in this work we focus on a smooth approximation to the $\ell_0$ regularization. 
\begin{equation}
    f_\tau(x)
    =
    \frac{1}{2}\|Ax-y\|_2^2
    +
    \lambda \sum_{i=1}^n \left(1 - e^{-x_i^2/\tau^2}\right),
\end{equation}
where $\tau > 0$ controls the sharpness of the approximation. As $\tau \to 0$, the penalty approaches the counting function $\|x\|_0$, while for fixed $\tau$ it remains differentiable and can be handled by gradient-based methods.

We compare two variants of our method, \emph{PGH-GD} and \emph{PGH-Adam}, against the corresponding baseline optimizers \emph{GD} and \emph{Adam}. We evaluate the methods along a regularization path over $\lambda$ and focus on the main qualitative outcomes.
{\em Note that we do not compare different $\ell_0$ relaxations but rather the use of different optimization algorithms to solve this particular relaxation.}
Additional implementation details are provided in the Appendix.

Figure~\ref{fig:sparse_recovery} summarizes the sparse recovery results. Figure~\ref{fig:sparse_recovery}(a) shows the tradeoff between the data misfit $\frac{1}{2}\|Ax-y\|_2^2$ and the smooth $\ell_0$ term $\sum_i (1-e^{-x_i^2/\tau^2})$. This plot plays the role of an L-curve: better solutions lie closer to the lower-left region, where one achieves both low reconstruction error and stronger sparsity. The PGH-based methods trace a more favorable frontier than the baseline methods over most of the path, indicating an improved sparsity--fidelity tradeoff.

Figure~\ref{fig:sparse_recovery}(b) shows the attained objective value $f_\tau(x)$ as a function of $\lambda$. Both PGH variants obtain lower objective values than standard GD and Adam. The improvement is particularly visible in the intermediate range of $\lambda$, where the balance between fitting the measurements and enforcing sparsity is most delicate. This suggests that the probabilistic homotopy mechanism helps the optimizer navigate the nonconvex landscape more effectively and avoid poorer local minima.

Overall, these experiments indicate that our methods provide a consistent improvement over the corresponding first-order baselines for sparse recovery. Empirically, PGH leads to solutions that are both more competitive in objective value and better balanced in terms of sparsity and data fidelity.

\begin{figure}[t]
    \centering
    \begin{tabular}{cc}
        \includegraphics[width=0.48\linewidth]{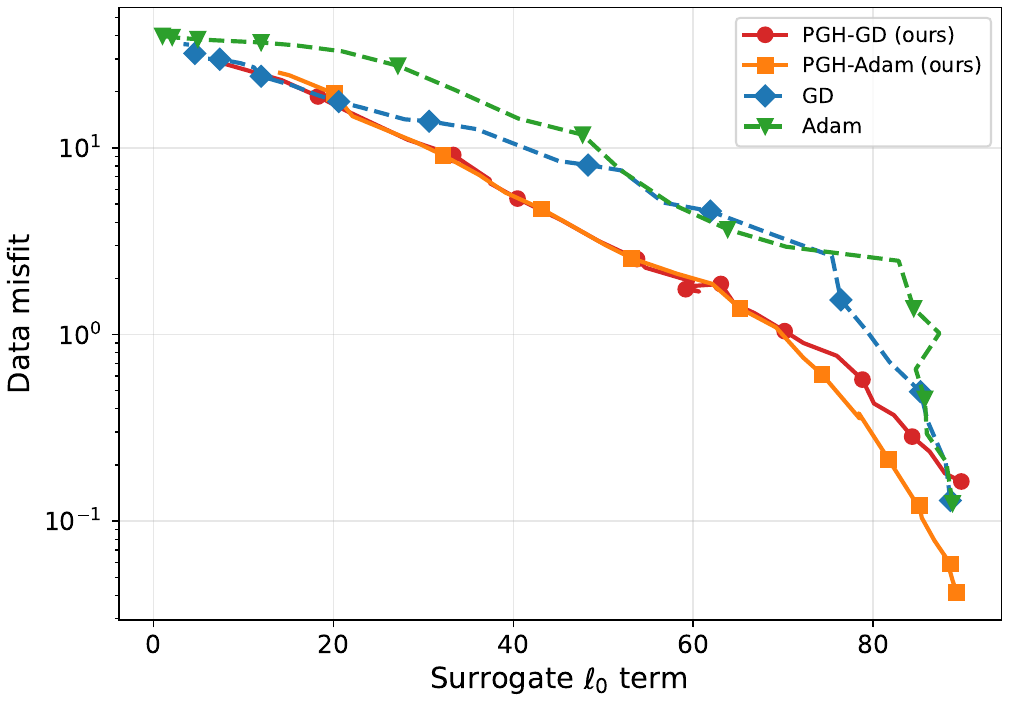} &
        \includegraphics[width=0.48\linewidth]{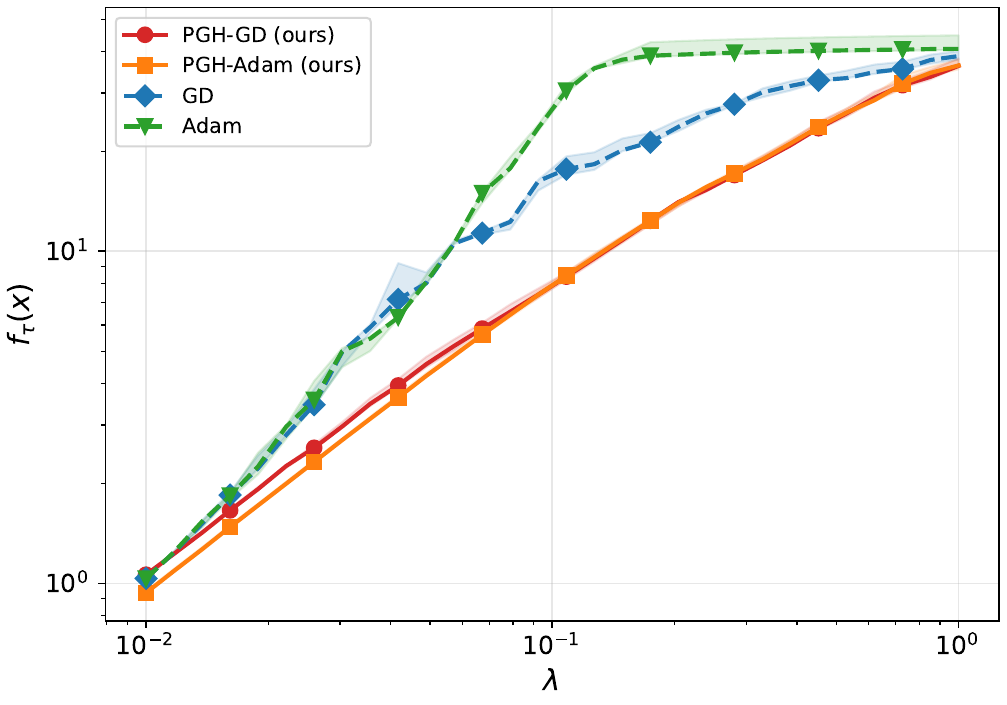} \\
        (a) Data misfit vs.\ surrogate $\ell_0$ term
        &
        (b) Final objective $f_{\tau}(x)$ vs.\ $\lambda$
    \end{tabular}
    \caption{Sparse recovery results along the regularization path. Left: tradeoff between reconstruction fidelity and sparsity. Right: attained smooth objective value as a function of the regularization parameter. In both plots, the PGH variants improve over the corresponding GD and Adam baselines.}
    \label{fig:sparse_recovery}
\end{figure}

\section{Conclusion}

In this work, we introduced Probabilistic Gaussian Homotopy (PGH), a continuation framework that operates in probability space rather than directly in objective space, leading to Boltzmann-weighted aggregation of perturbed gradients that emphasizes low-energy regions and provides a principled search direction for difficult nonconvex problems. We further established connections to finite-temperature Moreau-type smoothing and a posterior-mean interpretation, and demonstrated across a range of nonconvex optimization settings that PGH yields strong empirical performance, including favorable results on benchmark test problems and improved behavior on sparse recovery. Overall, these results suggest that probability-space continuation is a promising and effective alternative to classical smoothing approaches for challenging nonconvex optimization.

\clearpage
\newpage

%
% ---- Bibliography ----
%
% BibTeX users should specify bibliography style 'splncs04'.
% References will then be sorted and formatted in the correct style.
%

\appendix
\section{Experimental Details}

\subsection{Benchmark Problems}
\label{app:benchmark_details}

We evaluate the method on four standard nonconvex benchmark functions: Ackley, Griewank, Alpine1, and Levy. Unless otherwise stated, experiments are run in dimension $n=10$, with a maximum budget of $2\times 10^5$ function evaluations and $30$ independent random trials. Success is defined by the threshold $f(x) < 5\times 10^{-2}$. All methods are initialized from a random point sampled uniformly from the standard box constraints of the corresponding benchmark function, as stated in \cite{SurjanovicBingham2013VLSF}. Specifically, we use the domains $[-5,5]^n$ for Ackley, $[-600,600]^n$ for Griewank, and $[-10,10]^n$ for Alpine1 and Levy.
All methods are evaluated under the same function-evaluation budget and the same success criterion as in the main text. Results are reported as the mean number of function evaluations required to reach the target threshold, with ``N'' indicating that no successful run was observed within the allotted budget.

\subsubsection{Benchmark Objective Functions.}

We evaluate the method on four standard nonconvex benchmark functions.

\paragraph{Ackley function.}
\begin{equation}
\begin{aligned}
f_{\mathrm{Ackley}}(x)
&=
-20 \exp\!\left(-0.2 \sqrt{\frac{1}{n}\sum_{i=1}^n x_i^2}\right) \\
&\quad
- \exp\!\left(\frac{1}{n}\sum_{i=1}^n \cos(2\pi x_i)\right)
+ 20 + e .
\end{aligned}
\end{equation}

\paragraph{Griewank function.}
\begin{equation}
f_{\mathrm{Griewank}}(x)
=
\frac{1}{40}\sum_{i=1}^n x_i^2
- \prod_{i=1}^n \cos\!\left(\frac{x_i}{\sqrt{i}}\right)
+ 1 .
\end{equation}

\paragraph{Alpine1 function.}
\begin{equation}
f_{\mathrm{Alpine1}}(x)
=
\sum_{i=1}^n \left|x_i \sin(x_i) + 0.1 x_i\right| .
\end{equation}

\paragraph{Levy function.}
\begin{equation}
\begin{aligned}
f_{\mathrm{Levy}}(x)
&=
\sin^2(\pi w_1)
+ \sum_{i=1}^{n-1} (w_i-1)^2 \left(1 + 10\sin^2(\pi w_i + 1)\right) \\
&\quad
+ (w_n-1)^2 \left(1 + \sin^2(2\pi w_n)\right) .
\end{aligned}
\end{equation}

where
\begin{equation}
w_i = 1 + \frac{x_i - 1}{4}, \qquad i=1,\dots,n.
\end{equation}

The global minimizers are $x^\star = 0$ for Ackley, Griewank, and Alpine1, and $x^\star = (1,\dots,1)$ for Levy.

\subsubsection{PGH Variants and Hyperparameters.}

We report two variants of our method, \emph{PGH-GD} and \emph{PGH-Adam}, which share the same probabilistic homotopy objective and differ only in the base optimizer used in the update step. In both cases, the PGH gradient is estimated by Monte Carlo sampling with antithetic perturbation pairs. All PGH runs use $K=4$ Monte Carlo samples per step; the only exception is PGH-GD on Levy, where we use $K=16$ to reduce variance. The homotopy parameter is increased from $0$ to $1$ over a prescribed number of homotopy steps, after which the method continues at full homotopy until the evaluation budget is exhausted. Across the benchmark problems, the number of homotopy steps was tuned per problem and typically ranged from $50$ to $200$. Both PGH-GD and PGH-Adam use cosine learning-rate schedules over the homotopy phase, so the tuned values are interpreted as initial learning rates. These initial learning rates were also selected separately for each problem and optimizer, and ranged approximately from $10^{-1}$ to $5\times 10^{1}$, reflecting differences in scaling and landscape geometry across the benchmark functions. In all cases, the iterates are constrained to remain within the benchmark search box.

\subsection{Sparse Recovery}
\label{app:sparse_recovery_details}

For the sparse recovery experiments, we consider signals of ambient dimension $n=10{,}000$ with sparsity level $k=100$, observed through undersampled linear measurements with sampling ratio $m/n=0.15$ and additive Gaussian noise of standard deviation $\sigma=0.01$. The smooth $\ell_0$ parameter is set to $\tau=0.05$, and results are averaged over $3$ independent trials. We sweep the regularization parameter over a log-spaced grid with $\lambda \in [10^{-2},1]$ using $30$ values.

For the PGH methods, we use $K=4$ Monte Carlo samples per iteration, with a maximum of $10{,}000$ iterations, corresponding to a total budget of $4\times 10^4$ function evaluations. PGH uses learning rate $0.05$. We use perturbation scale $\varepsilon=0.1$ and minimum homotopy parameter $t_{\min}=0.37$. The homotopy parameter is updated using a square-root schedule, so that it increases gradually from $t_{\min}$ to $1$ over the course of the run. We choose $t_{\min}>0$ to avoid the near-zero homotopy regime, where gradients become very small and optimization can progress slowly. In the PGH implementation, antithetic perturbation pairs are used in the Monte Carlo gradient estimate to reduce variance. For the baseline methods, GD uses learning rate $0.05$ and Adam uses learning rate $0.01$ for stability. All methods use cosine annealing schedules, and are initialized from the zero vector.

\clearpage
\newpage

\begin{credits}
% \subsubsection{\ackname} A bold run-in heading in small font size at the end of the paper is
% used for general acknowledgments, for example: This study was funded
% by X (grant number Y).

% \subsubsection{\discintname}
% It is now necessary to declare any competing interests or to specifically
% state that the authors have no competing interests. Please place the
% statement with a bold run-in heading in small font size beneath the
% (optional) acknowledgments,
% for example: The authors have no competing interests to declare that are
% relevant to the content of this article. Or: Author A has received research
% grants from Company W. Author B has received a speaker honorarium from
% Company X and owns stock in Company Y. Author C is a member of committee Z.
\subsubsection{\discintname}
The authors have no competing interests to declare that are
relevant to the content of this article.
\end{credits}

\bibliographystyle{splncs04}
\bibliography{biblio}
%% Note that this preceding line implies that you store your BibTeX references in a file called 'mybibliography.bib'. If you instead store your references in a file with a different name, for instance 'references.bib', the preceding line should read '\bibliography{references}'. Whatever you do, DO NOT put the file name extension .bib inside the \bibliography command; this will trip up LaTeX compilers. 
%
% If you do not want to use BibTeX, you can also type up the bibliography exactly as you see fit, using the following structure:
% \begin{thebibliography}{8}
% % Note that this number 8 reserves an amount of space (equal to the natural width of the given number) for the label of your references; if you have more than 9 references, you will want to change this number to 18. If you have more than 19 references, this number is best changed to 88. If you have more than 99 references, I salute you.
% \bibitem{ref_article1}
% Author, F.: Article title. Journal \textbf{2}(5), 99--110 (2016)

% \bibitem{ref_lncs1}
% Author, F., Author, S.: Title of a proceedings paper. In: Editor,
% F., Editor, S. (eds.) CONFERENCE 2016, LNCS, vol. 9999, pp. 1--13.
% Springer, Heidelberg (2016). \doi{10.10007/1234567890}

% \bibitem{ref_book1}
% Author, F., Author, S., Author, T.: Book title. 2nd edn. Publisher,
% Location (1999)

% \bibitem{ref_proc1}
% Author, A.-B.: Contribution title. In: 9th International Proceedings
% on Proceedings, pp. 1--2. Publisher, Location (2010)

% \bibitem{ref_url1}
% LNCS Homepage, \url{http://www.springer.com/lncs}, last accessed 2023/10/25
% \end{thebibliography}

% \clearpage
% \newpage
% \appendix
% \input{supplement}

\end{document}